\title{\LARGE \bf
Progressive Explanation Generation for Human-robot Teaming
}
\author{Yu Zhang and Mehrdad Zakershahrak   
\thanks{Yu Zhang and Mehrdad Zakershahrak are with the Computer Science and Engineering
Department at Arizona State University, {\tt\small \{yzhan442\}@asu.edu, mehrdad@asu.edu}.}%
}
\newtheorem{definition}{Definition}
\newtheorem{theorem}{Theorem}
\newtheorem{problem}{Problem}
\DeclareMathOperator*{\argmin}{arg\,min}
\begin{document}
%
\maketitle

\begin{abstract}
Generating explanation to explain its behavior is an essential capability for a robotic teammate.
Explanations help human partners better understand the situation and maintain trust of their teammates.
Prior work on robot generating explanations
focuses on providing the reasoning behind its decision making. 
These approaches, however, fail to heed the cognitive requirement of understanding an explanation. 
In other words, while they provide the right explanations from the explainer's perspective,
the explainee part of the equation is ignored. 
In this work, we address an important aspect along this direction that contributes to a better understanding of a given explanation,
which we refer to as the {\it progressiveness} of explanations. 
A progressive explanation improves understanding by limiting the cognitive effort 
required at each step of making the explanation.
As a result, such explanations are expected to be ``smoother'' and hence easier to understand.
A general formulation of progressive explanation is presented. 
Algorithms are provided based on several alternative quantifications of cognitive effort
as an explanation is being made, which are evaluated in a standard planning competition domain. 
\end{abstract}

\section{Introduction}
\label{sec:intro}

Similar to teaming between humans, a robotic teammate must often explain its behavior to its human partners.
Explanations in such a teaming context provide the reasoning behind one's decision making \cite{lombrozo2006structure},
and help with building a shared situation awareness and maintaining trust between teammates \cite{154193128803200221, nancy2015}.
Although there exists prior work on generating explanations, 
those approaches often ignore the cognitive requirement of understanding an explanation.
The focus there is on generating the right explanations from the explainer's perspective rather than good explanations for the {\it explainee} \cite{sohrabi2011preferred, excuses-icaps, hanheide2017robot}. 
Unsurprisingly, the right explanation may not necessarily be a good explanation--anyone
with parental experience would share the sympathy. 
Such dissonance may be due to many reasons, 
such as information asymmetry and different cognitive capabilities, to name a few.

We summarize such discrepancies as {\it model differences}--the differences 
between the cognitive models that govern the generation and interpretation of an explanation. 
This follows our intuition since that, 
assuming the explainer is incentivized to make the explainee understand 
the decision in the same way as he does\footnote{An arguably more interesting situation may be for the explainer to deliberately introduce knowledge gap between the two parties under secretive intentions.}, 
an explanation from the perspective of the explainer must be perfectly right and understandable,
that is, if the explanation were to made to the explainer himself. 
Unfortunately, the purpose of an explanation is for the explainee,
who may have a very different model for interpreting the explanation.
In our prior work, we investigated how explanations can be made subject to such model differences \cite{chakraborti2017plan},
where the focus is on generating explanations that also make sense given the model of the explainee.

In this work, we take a step further by generating explanations while also 
considering the differences between the cognitive capabilities that may be present between the explainer and explainee.
This is especially relevant to human-robot teaming since robots
are frequently being deployed to situations that require high cognitive (computational) powers
that humans do not have.  
Thus, the motivation here is to generate explanations that minimize the cognitive effort required for understanding them. 
We first note that making an explanation is not an instantaneous task; information must be 
conveyed in a sequential order. 
Hence, this key to reducing cognitive effort is equivalent to minimizing the sum of the effort
required at each step as an explanation is being made. 
This means that not only the collection of information but also
the sequence of presenting it matters. 
Consequently, we term our approach {\it progressive explanation generation} to capture that aspect. 
Consider the following example of a conversation between two friends,
which illustrates the importance of providing information in the right order 
when making an explanation.

\vskip3pt
\noindent
{\tt
Amy: Let's go to the outlet today.\\
Monica: My car is ready. \\
Amy: Great! \\
Monica: The rain will stop soon. \\
Amy: Wonderful! \\
Monica: By the way, today is a holiday (shops closed). \\
Amy: You are telling me now!\\
Monica: Let us go to the central park!\\
Amy: ...
}
\vskip3pt

In this paper, we provide a general formulation of progressive explanation generation 
to avoid such issues in the above example that frequently occur in our lives.
We provide search methods with several alternative quantifications of cognitive effort 
at each step as an explanation is being made. 
Here, we focus on the problem formulation and efficient solution methods. 
Evaluation with human subjects to validate the usefulness of such explanations is delayed to future work. 
Next, we first review related work, followed by
a brief discussion of our prior work on explanation generation.
The formulation of progressive explanation and evaluations are provided afterwards.

\section{Related Work}
\label{related}

Explainable AI \cite{gunning2017explainable} is increasingly considered an important paradigm for designing future intelligent agents,
especially as such systems start to constitute an important part of our lives. 
The key requirement of explainable agency \cite{langley2017explainable} is to be ``explainable'' to the human partners.
To be explainable, an agent must not only provide a solution to a given problem,
but also make sure that the solution is perceived as such. 
A determining factor here is the human interpretation of the agent's behavior. 
It is not difficult to think of situations where the agent's help would be interpreted as no more than an interruption, 
which resulted in the pitfall of earlier effort in designing intelligent assistants,
such as the loss of situation awareness and trust \cite{endsley2016designing, langfred2004too}.   

The key challenge to explainable agency hence is the ability to model the human cognitive model
that is responsible for interpreting the behavior of other agents \cite{chakraborti2017ai}.
With such a model, there are different ways to make the robot's behavior explainable. 
One way is to modulate the robot's behavior towards the human's expectation of it based on the human cognitive model. 
Under this framework, a robot may generate legible motions \cite{Dragan-RSS-13} or explicable plans \cite{zhang2017plan}. 
Essentially, the robot sacrifices the plan quality to respect the human's expectation--the resulting plan is often a more costly plan.  
Another way is to signal its intention before execution \cite{gong2018robot}.
The intuition there is to provide additional context that helps explain the robot's decision.   
The most relevant way to this work is for the robot to explain its decision via explanation generation \cite{excuses-icaps, hanheide2017robot, sohrabi2011preferred}. 
The benefit of generating explanations, compared to generating explainable plans, 
is that the robot can keep its original plan. 
However, as mentioned earlier, the focus there is often on providing the right reasoning from the explainer's perspective,
not necessarily the explanation that is good for the explainee. 
Our prior work \cite{chakraborti2017plan} addresses this gap by proposing a new method of generating explanations
as a model reconciliation problem, 
which takes into account the explainee's model. 
Although some formulations of explanations there implicitly consider 
the cognitive requirement of understanding an explanation from the explainee, 
the main focus there is on generating explanations that reconcile the model differences,
so that the robot plan would seem to be right in the reconciled model of the explainee. 

The idea behind generating progressive explanations bears similarities to the idea of nudging or persuasion \cite{petty1981cognitive}. 
For example, in robotic and AI systems, the goal of nudging is to gradually nudge the human towards a new path \cite{lien2004shepherding}
or provide constant and nonintrusive reminders for performing various tasks \cite{maxwell1999alfred}. 
The general idea to develop a ``smooth'' (or socially acceptable \cite{miller2017explanation}) transition, whether physical or cognitive, to the objective.
We implement a similar idea here for explanation generation. 
To minimize the cognitive effort required at individual steps that lead to the objective, 
we make use of several alternative quantifications of cognitive effort that have connection to,
for example, the distance between the plans \cite{fox2006plan} of two adjacent steps as a result of the additional information provided during the explaining process. 
Intuitively, changes lead to cognitive effort. 

\section{Explanation Generation}

Since our problem setting is based on our  prior work on explanation generation \cite{chakraborti2017plan}, 
we first provide a brief review. 
The problem setting is presented in Fig. \ref{setting}
where two models are presented,
one for the robot ($M^R$) and one for the human ($M^H$), respectively.
The human uses $M^H$ to generate his expectation of the robot's behavior, $\pi_{M^H}$.
When it does not match the actual robot's behavior, $\pi_{M^R}$, (often generated by $M^R$ without considering $M^H$),
the robot becomes unexplainable.

\begin{figure}
\centering
  \includegraphics[width=0.7\columnwidth]{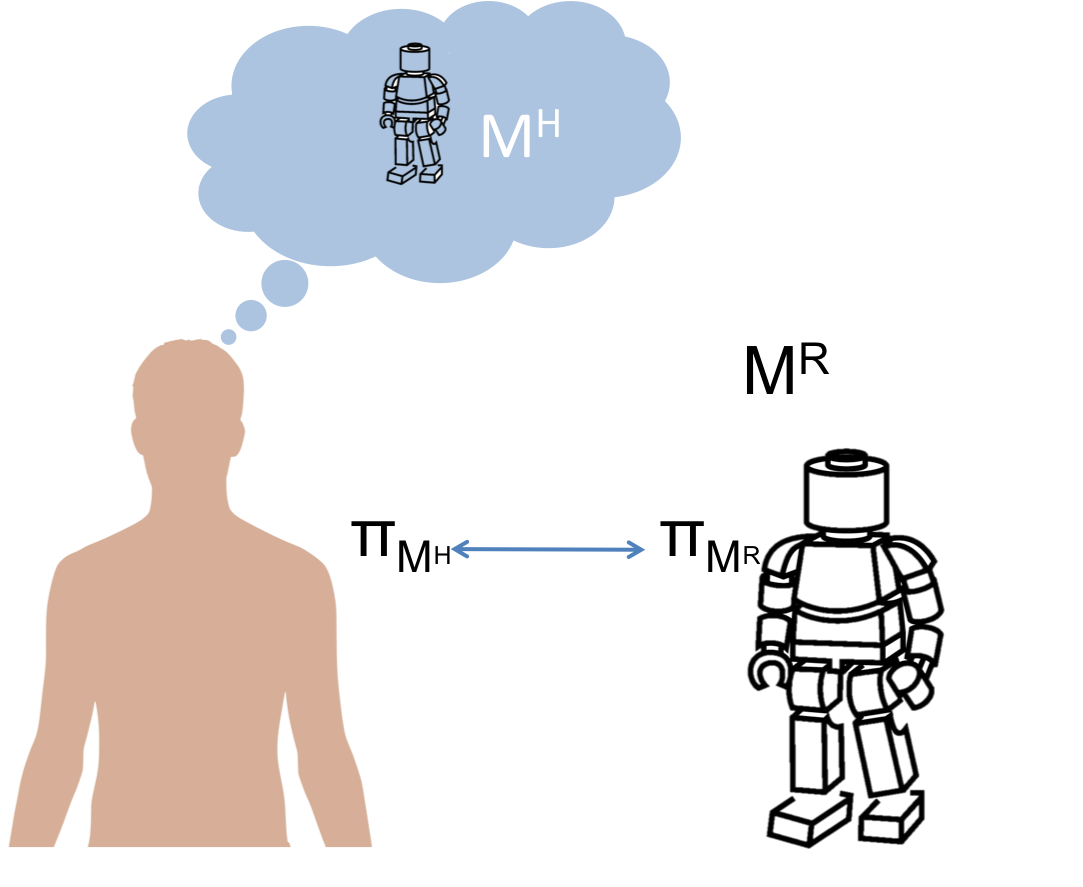}%
\caption{Problem setting of explanation generation. $M^R$ denotes the robot model and $M^H$ denotes the human model that is used to generate his expectation of the robot's behavior ($\pi_{M^H}$). When the expectation does not match the robot's behavior, $\pi_{M^R}$, explanations must be generated.}
 \label{setting}
\end{figure}

\begin{definition}[Model Reconciliation Setting]
The model reconciliation setting is a tuple $(\pi^*_{I, G}, \langle M^R, M^H\rangle)$,
where $cost(\pi^*_{I, G}, M^R) = cost^*_{M^R}(I, G)$.
\end{definition}
where $\pi_{I, G}^*$ is the robot's plan to be explained.
$cost(\pi^*_{I, G}, M^R)$ returns the cost of a plan under the model $M^R$,
and $cost^*_{M^R}(I, G)$ returns the optimal plan for a given initial and goal state pair under $M^R$.
The constraint of $cost(\pi^*_{I, G}, M^R) = cost^*_{M^R}(I, G)$ ensures that
the robot plan is optimal in its own model. 

Explanation generation under this setting is about how to bridge the two models so that
the robot plan $\pi^*_{I, G}$ also becomes explainable (optimal) in the human model after reconciliation as a result of the explanation. 
In this setting, the robot would assume what it has is the correct model and start from there.\footnote{If the robot were to know that the model was incorrect,  
it should have updated it in the first place!}
As a result, explanation for a model reconciliation setting can be considered
as requesting changes to the model of the human. 
Note that making an explanation may also lead to an error report 
if it is found out that the robot model is incorrect.

To capture model changes, a model function $\Gamma: \mathcal{M} \rightarrow S$ is defined to convert a model
to a set of model features.
In such a way, one model can be updated to another model with editing functions
that change one feature at a time. 
The set of feature changes is denoted as $\Delta(M_1, M_2)$
and the distance between two models as the number of such feature changes is denoted as $\delta(M_1, M_2)$.
In our prior and this work, we assume that the model is defined in PDDL \cite{fox2003pddl2}, 
which is similar to the STRIPS model \cite{fikes1971strips} where a model is specified as a tuple $M = (D, I, G)$. 
The domain $D = (F, A)$ is composed of a set of predicate variables, $F$, and a set of actions, $A$. 
$F$ is used to specify the state of the world.
Each action $a \in A$ can be represented as $a = ({\it pre}(a), {\it eff}^+(a), {\it eff}^-(a), c)$,
which denote the preconditions, add and delete effects, and cost of the action, respectively. 
For example, a very simple model for {\tt Amy} in our motivating example would be:

\vskip3pt
\noindent
{\tt
{\textbf{Initial state}}: not-holiday\\
{\textbf{Goal state}}: happy\\
\vskip1pt
\noindent 
{\textbf{Actions}}: \\
OUTLET-SHOPPING 5 (1)\\
pre: not-holiday (car-ready is-sunny)\\
eff$^+$: happy
\vskip3pt
\noindent 
VISIT-PARK 10 (9)\\
pre: (car-ready is-sunny)\\
eff$^+$: happy
}
\vskip3pt
The cost of an action is attached after the action name. 
For simplicity, we use only boolean variables above and take a few shortcuts for the notations. 
For example, for the action {\tt OUTLET-SHOPPING},
the cost is $5$ when neither car is ready or is sunny, 
and $1$ when they are.\footnote{In PDDL, this will be treated as two separate actions.}
The goal is to achieve the effect of {\tt happy} with the minimum cost.
In this example, the model, denoted as $M_{{\tt Amy}}$, will be
converted by the model function to:
\vskip2pt
$\Gamma(M_{{\tt Amy}}) = \{\\\text{{\tt init-has-not-holiday}},\\\text{{\tt goal-has-happy}},\\ \text{{\tt OS-has-precondition-not-holiday}}, \\\text{{\tt OS-has-add-effect-happy}},...\}$\\
where $OS$ is short for {\tt OUTLET-SHOPPING}.
The function essentially turns a model into a set of features that fully specify the model. 
Hence, changing the set of features will also change the model. 

\begin{definition}[Explanation Generation Problem]
The explanation generation problem is a tuple $(\pi^*_{I, G}, \langle M^R, M^H\rangle)$,
and an explanation is a set of unit feature changes to $M^H$ such that 
$1)$ $\Gamma(\widehat{M^H}) \setminus \Gamma(M^H) \subseteq \Gamma({M^R})$, and
$2)$ $cost(\pi^*_{I, G}, {\widehat{M^H}}) - cost_{\widehat{M^H}}^*(I, G) < cost(\pi^*_{I, G}, {M^H}) - cost_{M^H}^*(I, G)$,
where $\widehat{M^H}$ denotes the model after the changes.
\label{def:exp}
\end{definition}
The first condition requires that the changes to the human model must be consistent with the robot model.
This is reasonable given our previous assumption about the motivation of the agent. 
The second condition states that the robot's plan must be closer (in terms of cost) to the optimal plan after the model changes
 than before,
 since otherwise the explanation does not explain the robot's behavior,
 assuming that the human is rational.

\begin{definition}[Complete Explanation]
A complete explanation for an explanation generation problem is an explanation that
in addition satisfies $cost(\pi^*_{I, G}, {\widehat{M^H}}) = cost_{\widehat{M^H}}^*(I, G) = cost(\pi^*_{I, G}, M^R)$.\footnote{
We deviate from our prior work \cite{chakraborti2017plan} a bit requiring that the cost in $\widehat{M^H}$ is equivalent to the cost in $M^R$.
This is important for the human to also associate the right cost to the robot's plan .
}
\end{definition}

A complete explanation requires the model changes to make the robot's plan also optimal in the changed human model
and that the cost is consistent with that in the robot's model.
There may exist multiple complete explanations for a given explanation generation problem.
Some of them may be the most {\it concise} (i.e., contains minimum unit feature changes) and some may be {\it monotonic} (no additional changes that satisfy condition $1$ in Def. \ref{def:exp} may violate the condition of complete explanation). 
A concise explanation can be found using a model-space search over the set of possible model updates that may be made to $M^H$. The search starts from $M^H$ and incrementally adds more changes. A complete explanation searches from $M^R$ and takes aways changes that do not contribute to the violation of the condition of a complete explanation. 
An example of $\widehat{M_{{\tt Amy}}}$ (corresponds to $\widehat{M^H}$) after a complete explanation is:

\vskip3pt
\noindent
{\tt
{\textbf{Initial state}}:  {\sout{not-holiday}} car-ready (+) is-sunny (+)\\
{\textbf{Goal state}}: happy\\
\vskip1pt
\noindent 
{\textbf{Actions}}: \\
OUTLET-SHOPPING 5 (1)\\
pre: not-holiday (car-ready is-sunny)\\
eff$^+$: happy
\vskip3pt
\noindent 
VISIT-PARK 10 (9)\\
pre: (car-ready is-sunny)\\
eff$^+$: happy
}

where the strikeout is the feature removed and the $+$ following a feature denotes an addition.
These changes correspond to the explanation made in our motivating example.   
In this case, the robot model, $M^R$, corresponds to $M_{{\tt Monica}}$,
which is the same as $\widehat{M_{{\tt Amy}}}$ after the explanation (where the model changes incurred).

\section{Progressive Explanation Generation}

Although the {\it conciseness} of explanations \cite{chakraborti2017plan} in our prior work takes into account the amount of information,
which is connected to the cognitive effort required for understanding an explanation,
a concise explanation may not always be the best explanation. 
Our motivating example provides a perfect illustration of when the 
ordering may make a huge difference. 
Furthermore, a good explanation may sometimes involve information with some level of redundancy that helps with understanding \cite{mesmer2009information}. 
Here, we set out to develop a novel process to generate such explanations,
with the focus on reducing the cognitive effort. 

Given the general formulation of explanation in Def. \ref{def:exp},
an explanation is expressed as a set of unit feature changes to $M^H$.
The implication here is that making explanation is an incremental process,
where each step may represent a unit feature change. 
The cognitive effort then can be viewed as the sum of effort 
associated with understanding each change in a sequential order.
We couple the cognitive effort for each change with a general model-plan distance metric,
denoted as 
$\rho(\langle M_1, \pi_1\rangle, \langle M_2, \pi_2 \rangle)$,
where $M_1, \pi_1$ is the model and plan before the change,
and  $M_2, \pi_2$ is that after the change.

\begin{definition}[Progressive Explanation Generation (PEG)]
A progressive explanation is a complete explanation with an ordered 
sequence of unit feature changes that minimize the sum of the
model-plan distance metric:
$\argmin_{{\langle\Delta(\widehat{M^H}, M^H)\rangle}}{\sum_{f_i \in {\langle\Delta(\widehat{M^H}, M^H)\rangle}}{\rho_i}}$,
where $\rho_i$ is short for $\rho_i(\langle M_{i-1}, \pi_{i-1}\rangle, \langle M_i, \pi_i \rangle)$ and $i$ is the index of the model changes starting from $1$, and  $f_i$ denotes the $i$th unit feature change.
\label{peg}
\end{definition}
The angle brackets above convert a set to an ordered set 
and the summation is over the changes required 
for a complete explanation--computed for before and after each unit feature change
is made in a progressive fashion. 
More specifically, $M_0 = M^H, \pi_{0} = \pi^*_{M^H}(I, G)$,
and $M_i = \widehat{M^H}, \pi_{i} = \pi^*_{\widehat{M^H}}(I, G)$
where $\widehat{M_i}$ denotes $M^H$
after model changes of $f_{1:i}$.

\subsection{PEG with Different Distances}

Depending on how the model-plan distance metric is defined, 
different explanation may be resulted. 
Next, we look at a few options for defining this distance,
which intuitively have an impact on cognition. 
Search methods based on these options are provided afterwards. 

\begin{problem}
Progressive explanation generation with 
\begin{equation}
\rho_i = |cost^*_{{M_{i-1}}}(I, G) - cost^*_{{M_{i}}}(I, G)|
\end{equation}
\label{prob1}
\end{problem}

In this case, the distance at each step is characterized by the cost difference of the plans in the two models adjacent to a unit feature change, respectively. 
The search problem is in the model space and is expensive to solve.
Here, we can take advantage of the following equation,
which follows from basic arithmetics:

\begin{equation}
\sum_i \rho_i \geq |cost^*_{M^H}(I, G) - cost^*_{\widehat{M^H}}(I, G)|
\label{equ1}
\end{equation}

The equality above holds if and only if the changes in plan cost are monotonic with respect to the index $i$.
This also reflects the progressive nature of such explanations. 
This observation leads to an efficient heuristic,
where 
\begin{equation}
h(M_i) = |cost^*_{M_{i}}(I, G) - cost(\pi^*_{I, G}, \widehat{M^H})|
\end{equation}
Additionally, 
without the loss of generality, 
assuming that $cost^*_{M^H}(I, G) \leq cost^*_{\widehat{M^H}}(I, G) = cost(\pi^*_{I, G}, \widehat{M^H})$ is satisfied,
the search process could first check 
adding preconditions, removing add effects,
adding delete effects, or increasing action costs.
Since these changes will increase the cost of the plan,
they will more likely lead to faster search process.  

\begin{theorem}
The heuristic described above is admissible and consistent for problem \ref{prob1}. 
\label{thm1}
\end{theorem}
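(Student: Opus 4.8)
The plan is to strip the statement down to a one-dimensional geometry problem on the real line. First I would introduce the shorthand $g(M) := cost^*_M(I,G)$ for the optimal plan cost in a model and fix the target value $T := cost(\pi^*_{I,G}, \widehat{M^H})$. The key preliminary observation is that $T$ is a fixed constant, the same at every goal node of the model-space search: by the definition of a complete explanation, any reached goal $M_g$ satisfies $cost^*_{M_g}(I,G) = cost(\pi^*_{I,G}, M^R)$, and by the model reconciliation setting $cost(\pi^*_{I,G}, M^R) = cost^*_{M^R}(I,G)$. Hence $T = cost^*_{M^R}(I,G)$ and $g(M_g) = T$ at every goal, so $h(M_g) = |g(M_g) - T| = 0$. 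In this notation the step cost of the edge from $M_{i-1}$ to $M_i$ is $\rho_i = |g(M_{i-1}) - g(M_i)|$ and the heuristic is $h(M_i) = |g(M_i) - T|$.

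Next I would prove consistency, which is the cleaner of the two properties and which, together with the vanishing of $h$ at goals, will in fact hand me admissibility for free. For any edge $(M_{i-1}, M_i)$ produced by a single unit feature change, I must show $h(M_{i-1}) \le \rho_i + h(M_i)$, i.e. $|g(M_{i-1}) - T| \le |g(M_{i-1}) - g(M_i)| + |g(M_i) - T|$. This is nothing but the triangle inequality for real numbers applied to the three points $g(M_{i-1})$, $g(M_i)$, and $T$ on the line, so it holds with no further work. Consistency plus $h$ vanishing at every goal then yields admissibility by the standard argument.

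For completeness I would also give the direct admissibility argument. Taking any node $M_i$ and any path $M_i = N_0, N_1, \dots, N_k = M_g$ to a goal, I bound its total cost $\sum_{j=1}^{k} |g(N_{j-1}) - g(N_j)|$ from below by $|g(N_0) - g(N_k)| = |g(M_i) - T| = h(M_i)$. This lower bound is exactly Eq. (\ref{equ1}) applied to the subpath, following by telescoping the signed differences and then applying the triangle inequality, so I may invoke the already-stated inequality rather than re-deriving it. Taking the minimum over all goal-reaching paths---which by definition is the optimal cost-to-go $h^*(M_i)$---gives $h(M_i) \le h^*(M_i)$, which is admissibility.

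The only real obstacle here is not any calculation but the well-definedness of the target: I must be careful that $T$ really is one fixed constant shared by all complete-explanation goals, so that $h$ genuinely vanishes at every goal and the heuristic is a legitimate estimate toward a fixed target rather than a moving one. Pinning this down requires chaining the complete-explanation equality with the optimality constraint of the model reconciliation setting, as in the first paragraph. Once that is settled, both properties collapse to the one-dimensional triangle inequality encapsulated in Eq. (\ref{equ1}).
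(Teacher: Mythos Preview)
Your proposal is correct and takes essentially the same approach as the paper: both reduce admissibility and consistency to the triangle inequality $|a-c| \le |a-b| + |b-c|$ on the real line, applied to the optimal-cost values $g(M_{i-1})$, $g(M_i)$, and the target $T$. You are more careful than the paper in pinning down that $T$ is a single fixed constant shared by all complete-explanation goals (so that $h$ vanishes at every goal), and you additionally note that consistency plus $h=0$ at goals already yields admissibility; the paper simply asserts both inequalities separately without this organizational remark or the well-definedness discussion.
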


\begin{proof}
It can be easily verified that $h(M_i) = |cost^*_{M_{i}}(I, G) - cost(\pi^*_{I, G}, \widehat{M^H})| \leq \sum_{k > i} \rho_k$.
Hence, the heuristic above is admissible. 
For consistency, notice that $ |cost^*_{M_{i-1}}(I, G) - cost(\pi^*_{I, G}, \widehat{M^H})| = h(M_{i-1})  \leq \rho_i + h(M_{i}) = |cost^*_{{M_{i-1}}}(I, G) - cost^*_{{M_{i}}}(I, G)| + |cost^*_{M_{i}}(I, G) - cost(\pi^*_{I, G}, \widehat{M^H})|$.
\end{proof}

\begin{problem}
Progressive explanation generation with 
\begin{equation}
\rho_i = |cost^*_{M_{i-1}}(I, G) - cost^*_{M_{i}}(I, G)|^2
\end{equation}
\label{prob2}
\end{problem}

Compared to the first problem, the distance metric 
requires that the cost gaps across adjacent steps are as equally distributed as possible. 
For this problem, 
one possible heuristic is:
\begin{equation}
h(M_i) = 0.5 * |cost^*_{M_{i}}(I, G) - cost(\pi^*_{I, G}, \widehat{M^H})|^2
\end{equation}

\begin{theorem}
The heuristic described above is admissible and consistent for problem \ref{prob2}. 
\label{thm2}
\end{theorem}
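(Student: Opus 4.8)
The plan is to imitate the two-part structure of the proof of Theorem \ref{thm1}: establish admissibility by lower-bounding the true cost-to-go $\sum_{k>i}\rho_k$ by the heuristic, and establish consistency by verifying the one-edge inequality $h(M_{i-1}) \le \rho_i + h(M_i)$. Throughout I write $c_i = cost^*_{M_i}(I,G)$ and $C = cost(\pi^*_{I,G},\widehat{M^H})$, and I invoke the completeness condition so that $C = c_n$ at the terminal model, together with the monotonicity assumption $c_0 \le \cdots \le c_n = C$ under which the search first applies cost-increasing changes. The signed increments $d_k := c_{k-1}-c_k$ then all share a sign and telescope to $\sum_{k>i} d_k = c_i - C$, which is the one identity I would reuse from the linear case.

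For consistency I would reduce the claim to a single algebraic inequality. Setting $a = c_{i-1}-c_i$ and $b = c_i - C$, so that $c_{i-1}-C = a+b$, the target inequality $0.5(a+b)^2 \le a^2 + 0.5 b^2$ collapses to $ab \le \tfrac12 a^2$, equivalently $\tfrac12 a(a-2b) \ge 0$. I would then discharge this by completing the square and reading off the signs of $a$ and $b$ from the monotonicity assumption (both nonpositive), reducing it to a comparison between the magnitude of the current cost increment and the residual gap to $C$.

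For admissibility I would try to show $h(M_i) = \tfrac12 (c_i-C)^2 \le \sum_{k>i} d_k^2$. Here the clean telescoping of Theorem \ref{thm1} is unavailable: in the linear case the triangle inequality gives $\sum_{k>i}|d_k| \ge |\sum_{k>i} d_k|$ for free, but the squared metric is not subadditive, so I must instead relate $(\sum_{k>i} d_k)^2$ to $\sum_{k>i} d_k^2$ through a Cauchy--Schwarz / power-mean estimate together with the fact that the number $m$ of remaining unit changes is fixed by the target model and hence by the node.

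The hard part will be exactly this admissibility step, and it is where the factor $\tfrac12$ is genuinely load-bearing. Cauchy--Schwarz yields $(\sum_{k>i} d_k)^2 \le m \sum_{k>i} d_k^2$, i.e. $\sum_{k>i} d_k^2 \ge (c_i-C)^2/m$, so the bound $\tfrac12(c_i-C)^2$ dominates the cost-to-go only when the residual gap is concentrated in few steps rather than spread evenly. I therefore expect the crux to be arguing that the monotone, cost-increasing search order keeps the per-step jumps large relative to their count so that the sum of squares stays above $\tfrac12(c_i-C)^2$; the same magnitude condition (that the current increment be at least twice the residual gap) is precisely what makes the consistency inequality go through. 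Pinning down the structural assumption on the increments that simultaneously secures both the admissibility lower bound and the per-edge consistency inequality is the step I would spend the most effort on.
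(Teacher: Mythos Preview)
Your analysis is considerably more careful than the paper's own argument, and in fact you have correctly located a genuine obstruction that the paper glosses over. The paper's entire proof is a one-line appeal to the inequality
\[
a_1^2 + a_2^2 + \cdots + a_n^2 \;\le\; \tfrac12\,(a_1 + a_2 + \cdots + a_n)^2,
\]
invoked by analogy with Theorem~\ref{thm1}. That inequality is simply false (take $n=1$), and even if one reverses the direction to the form you would actually need for admissibility, namely $\tfrac12(\sum_k d_k)^2 \le \sum_k d_k^2$, it fails as soon as three or more remaining increments are roughly equal: with $d_k\equiv 1$ and $m=3$ one gets $4.5 \le 3$. So your Cauchy--Schwarz observation that $\sum_{k>i} d_k^2$ can be as small as $(c_i-C)^2/m$ is exactly the counterexample schema, and the factor $\tfrac12$ cannot repair it for $m\ge 3$.

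Your consistency reduction to $a(2b-a)\le 0$ is likewise correct, and your reading of it---that it forces the current increment to dominate twice the residual gap---is accurate. No assumption in the problem setup guarantees this; the monotone ordering only fixes the signs of $a$ and $b$, not their relative magnitudes. So the ``structural assumption'' you propose to hunt for in the final paragraph does not exist in the paper, and you should not expect to find one that makes the heuristic admissible and consistent in general. The honest conclusion is that the theorem, as stated, does not hold without further restrictions (for instance $m\le 2$ remaining changes, where your inequality does go through), and the paper's proof sketch does not supply them. Your instinct that this is ``the hard part'' is right; the resolution is not a missing trick but a missing hypothesis.
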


\begin{proof}
This proof is similar to the proof for problem \ref{prob1} following the inequality that $a_1^2 + a_2^2 + ... + a_n^2 \leq 0.5 * (a_1 + a_2 + ... + a_n)^2$.
\end{proof}


%
%

\begin{figure}
\centering
  \includegraphics[width=0.6\columnwidth]{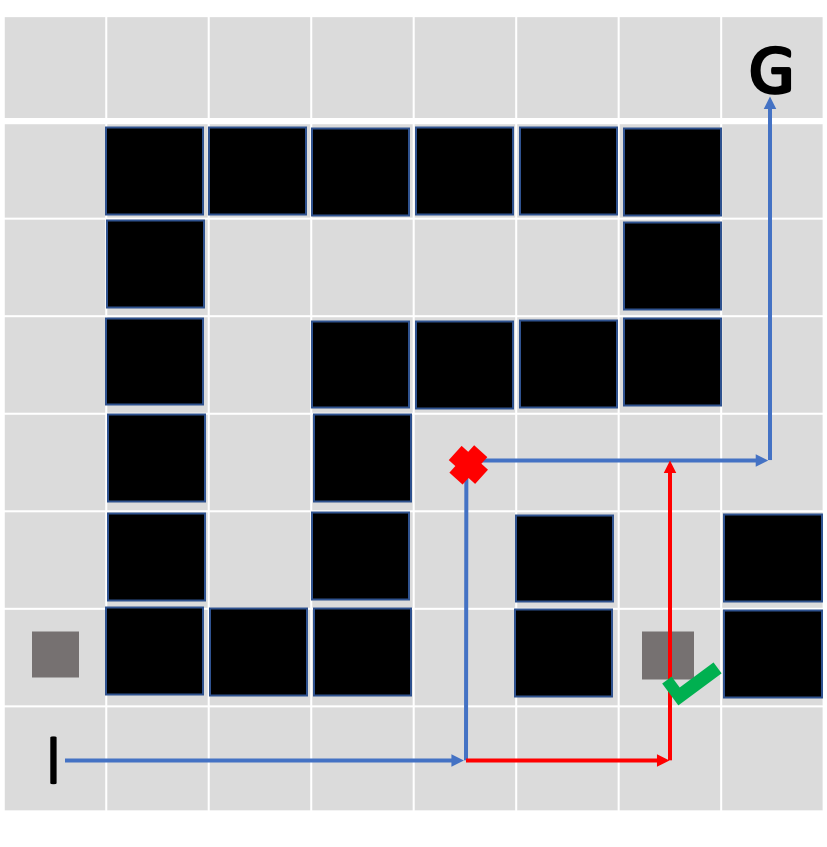}%
\caption{A path-finding scenario that illustrates progressive explanation with plan editing distance.
The initial and goal positions are marked by $I$ and $G$, respectively. 
Black squares are obstacles. 
In this scenario, the original plan is no longer feasible and there are two alternatives that are originally known to be blocked but actually clear. 
The squares that are originally believed to be blocked are marked as small gray squares. 
A progressive explanation with plan editing distance would consider explaining the clearing of the obstacle marked by the green tick, leading to the changes to the original path (which is blocked) shown in red. 
}
 \label{distance}
\end{figure}

Many times, it is not the cost of the plan that matters. 
Instead, we need to look at how significant the changes to plans are across different steps. 
When the change to the plan is less significant before and after a change, 
we only need to slightly modify the existing plan to reach the new plan.
This leads to less cognitive effort.  
An illustration of this through an example is presented in Fig. \ref{distance}.

%
%
%

\begin{problem}
Progressive explanation generation with 
\begin{equation}
\rho_i = d(\pi_{M_{i-1}}^*(I, G), \pi_{M_{i}}^*(I, G))
\end{equation}
\label{prob3}
\end{problem}
where $d(\pi_{M_{i-1}}^*(I, G), \pi_{M_{i}}^*(I, G))$ denotes the minimum editing distance between the two optimal plans
that are created in $M_{i-1}$ and $M_i$, respectively. 
Note that $\pi_i = \pi^*_{I, G}$, which is the robot plan to be explained. 
Similarly, we can apply the following heuristic:
\begin{equation}
h(M_i) = d(\pi^*_{M_i}(I, G), \pi_{I, G}^*)
\end{equation}

\begin{theorem}
The heuristic described above is admissible and consistent for problem \ref{prob3}. 
\label{thm3}
\end{theorem}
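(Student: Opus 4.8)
The plan is to reduce both properties to the triangle inequality for the plan editing distance $d$, mirroring the structure of the proofs for Problems \ref{prob1} and \ref{prob2} but replacing the absolute-value metric with $d$. The first thing I would do is establish that the minimum plan editing distance is a genuine metric on plans—in particular that it satisfies $d(\pi_a, \pi_c) \leq d(\pi_a, \pi_b) + d(\pi_b, \pi_c)$. This is the workhorse fact: any sequence of edits converting $\pi_a$ into $\pi_b$ concatenated with one converting $\pi_b$ into $\pi_c$ is itself a (not necessarily minimal) edit sequence from $\pi_a$ to $\pi_c$, so the minimum cost from $\pi_a$ to $\pi_c$ cannot exceed the sum of the two minima.

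For consistency, I would then simply instantiate the triangle inequality at the three plans $\pi_{i-1} = \pi^*_{M_{i-1}}(I,G)$, $\pi_i = \pi^*_{M_i}(I,G)$, and the target $\pi^*_{I,G}$, giving $h(M_{i-1}) = d(\pi_{i-1}, \pi^*_{I,G}) \leq d(\pi_{i-1}, \pi_i) + d(\pi_i, \pi^*_{I,G}) = \rho_i + h(M_i)$, which is exactly the consistency condition.

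For admissibility, I would recall that a complete explanation forces the final plan of the progression to equal the target, since $\pi^*_{I,G}$ is by definition optimal in $\widehat{M^H}$. Thus along any path from $M_i$ to a complete explanation, the accumulated distance $\sum_{k>i}\rho_k = \sum_{k>i} d(\pi_{k-1}, \pi_k)$ is a chain of edit distances from $\pi_i = \pi^*_{M_i}(I,G)$ to $\pi^*_{I,G}$. Iterating the triangle inequality along this chain collapses it to the single-step bound $h(M_i) = d(\pi_i, \pi^*_{I,G}) \leq \sum_{k>i} \rho_k$; since this lower bound holds for \emph{every} such path, the heuristic never overestimates the optimal remaining editing cost.

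I expect the only genuine subtlety—and hence the step to state carefully rather than the calculation itself—to be the well-definedness of $d$ when multiple optimal plans exist in an intermediate model $M_i$. Both the heuristic and the per-step distance reference ``the'' optimal plan $\pi^*_{M_i}(I,G)$, so I would note that a fixed tie-breaking rule (for instance, selecting at each node the optimal plan minimizing distance to $\pi^*_{I,G}$) keeps the $\pi_i$'s consistent between the definitions of $\rho_i$ and of $h$. Under any such fixed choice the triangle-inequality arguments above go through verbatim, so no new ideas beyond the metric property are required.
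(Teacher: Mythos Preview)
Your proposal is correct and follows essentially the same route as the paper: both reduce admissibility and consistency to the metric properties of the plan editing distance $d$, mirroring the argument for Theorem~\ref{thm1} with $d$ in place of $|\cdot|$. You are in fact more precise than the paper, which justifies the key inequality by saying $d$ is ``positive and symmetric'' (insufficient on its own) whereas you correctly isolate the triangle inequality as the workhorse; your remark about fixing a tie-breaking rule when $\pi^*_{M_i}(I,G)$ is non-unique is a genuine subtlety the paper does not address.
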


\begin{proof}
The plan editing distance clearly satisfies $\sum_i d_i \geq d(\pi_{M^H}^*(I, G), \pi^*_{I, G})$,
since the distance metric is positive and symmetric. 
A similar proof follows from Theorem \ref{thm1}.
\end{proof}


\begin{problem}
Progressive explanation generation with 
\begin{equation}
\rho_i = d^2(\pi_{M_{i-1}}^*(I, G), \pi_{M_{i}}^*(I, G))
\end{equation}
\label{prob4}
\end{problem}
Similarly, we can use the following heuristic:
\begin{equation}
h(M_i) = 0.5 * d^2(\pi^*_{M_{i}}(I, G) - \pi^*_{I, G})
\end{equation}

\begin{theorem}
The heuristic described above is admissible and consistent for problem \ref{prob4}. 
\end{theorem}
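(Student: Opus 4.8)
The plan is to treat this statement as the squared analogue of Theorem~\ref{thm3}, mirroring the way Theorem~\ref{thm2} extends Theorem~\ref{thm1}. Two ingredients are already available, and I would reuse both: the metric properties of the plan editing distance $d$ (positivity, symmetry, and the triangle inequality) that underlie the proof of Theorem~\ref{thm3}, and the quadratic bounding inequality relating $\sum_k a_k^2$ and $(a_1 + \cdots + a_n)^2$ that was invoked in the proof of Theorem~\ref{thm2}. Throughout I would write $d_k = d(\pi^*_{M_{k-1}}(I,G), \pi^*_{M_k}(I,G))$ for the step distances, so that $\rho_k = d_k^2$, and recall that $\pi_n = \pi^*_{I,G}$ is the plan to be explained, matching the convention fixed in Def.~\ref{peg}.

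For admissibility I would first chain the triangle inequality along the unit feature changes that remain after $M_i$, obtaining $d(\pi^*_{M_i}(I,G), \pi^*_{I,G}) \le \sum_{k>i} d_k$, exactly as in Theorem~\ref{thm3}. Squaring this bound and then controlling the squared total by the sum of squared step distances, using the same quadratic inequality as in the proof of Theorem~\ref{thm2}, yields $h(M_i) = 0.5\,d^2(\pi^*_{M_i}(I,G), \pi^*_{I,G}) \le \sum_{k>i} d_k^2 = \sum_{k>i}\rho_k$, which is precisely the optimal cost-to-go from $M_i$. This shows $h$ never overestimates.

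For consistency I would specialize to a single edge, from $M_{i-1}$ to its successor $M_i$. Writing $x = d_i$ and $y = d(\pi^*_{M_i}(I,G), \pi^*_{I,G})$, the triangle inequality gives $d(\pi^*_{M_{i-1}}(I,G), \pi^*_{I,G}) \le x + y$, so the required inequality $h(M_{i-1}) \le \rho_i + h(M_i)$ reduces to the two-term instance of the same quadratic bound, namely $0.5\,(x+y)^2 \le x^2 + 0.5\,y^2$, after which the admissibility of the terminal state closes the argument.

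I expect the quadratic bounding step to be the main obstacle, exactly as it is the only nontrivial part of Theorem~\ref{thm2}: the passage from the summed step distances to the summed squared step distances is where the constant $0.5$ enters, and it is the point that most needs care, both in the direction in which the inequality is applied and in the conditions (the progressive, near-monotonic ordering of the changes) under which the constant is usable. By contrast, the metric portion is routine once the plan editing distance is accepted as a genuine distance, so essentially all of the work is inherited from combining the arguments of Theorems~\ref{thm2} and~\ref{thm3}.
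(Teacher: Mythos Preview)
Your overall strategy---combining the metric argument of Theorem~\ref{thm3} with the quadratic inequality invoked in Theorem~\ref{thm2}---is exactly what the paper does; its entire proof is the single sentence ``follows directly from Theorems~\ref{thm2} and~\ref{thm3}.'' In that sense your proposal matches the intended route.

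However, there is a genuine gap, and it lies precisely where you flagged the ``main obstacle.'' The inequality you write down for consistency,
\[
0.5\,(x+y)^2 \;\le\; x^2 + 0.5\,y^2,
\]
is simply false: take $x=y=1$, so the left side is $2$ and the right side is $1.5$. Equivalently, it would require $2y\le x$, which nothing in the setup guarantees. The admissibility step has the same defect once more than two terms are involved: from $d(\pi^*_{M_i},\pi^*_{I,G})\le\sum_{k>i}d_k$ you would need $0.5\,(\sum_{k>i} d_k)^2 \le \sum_{k>i} d_k^2$, but already $d_{i+1}=d_{i+2}=d_{i+3}=1$ gives $4.5>3$. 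The inequality stated in the proof of Theorem~\ref{thm2}, $\sum a_k^2 \le 0.5(\sum a_k)^2$, points the other way and in any case fails for $n=1$; so invoking ``the same quadratic bound'' does not rescue the argument. In short, the proposal faithfully reproduces the paper's scheme, but the quadratic bounding step does not hold in the direction required, and no amount of reordering the unit feature changes repairs the counterexamples above. If you want a rigorous version, you would need either a different constant in the heuristic (one that actually makes the square inequality valid for the relevant number of remaining steps) or an additional structural assumption that forces the step distances to be comparable.
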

The proof follows directly from Theorems \ref{thm2} and \ref{thm3}. 
Following the heuristics above, 
we can easily implement search algorithms for progressive explanations. 
For the motivating example, 
solving for Problems 1 and 2 would yield the desired explanation:

\vskip3pt
\noindent
{\tt
Amy: Let's go to the outlet today.\\
Monica: But today is a holiday (shops closed). \\
Amy: Too bad! Let us go to the central park then. \\
Monica: My car is ready. \\
Amy: Great! \\
Monica: And the rain will stop soon. \\
Amy: Excellent! \\
}
\vskip2pt
 
 An illustration of the changes of plan cost per explanation step for this example is in Fig. \ref{example}.
 Note that progressive explanation creates a smoother curve even though both explanations have the same number of steps,
 which is expected to contribute to a better understanding of the explanation. 
 \begin{figure}
\centering
  \includegraphics[width=1.0\columnwidth]{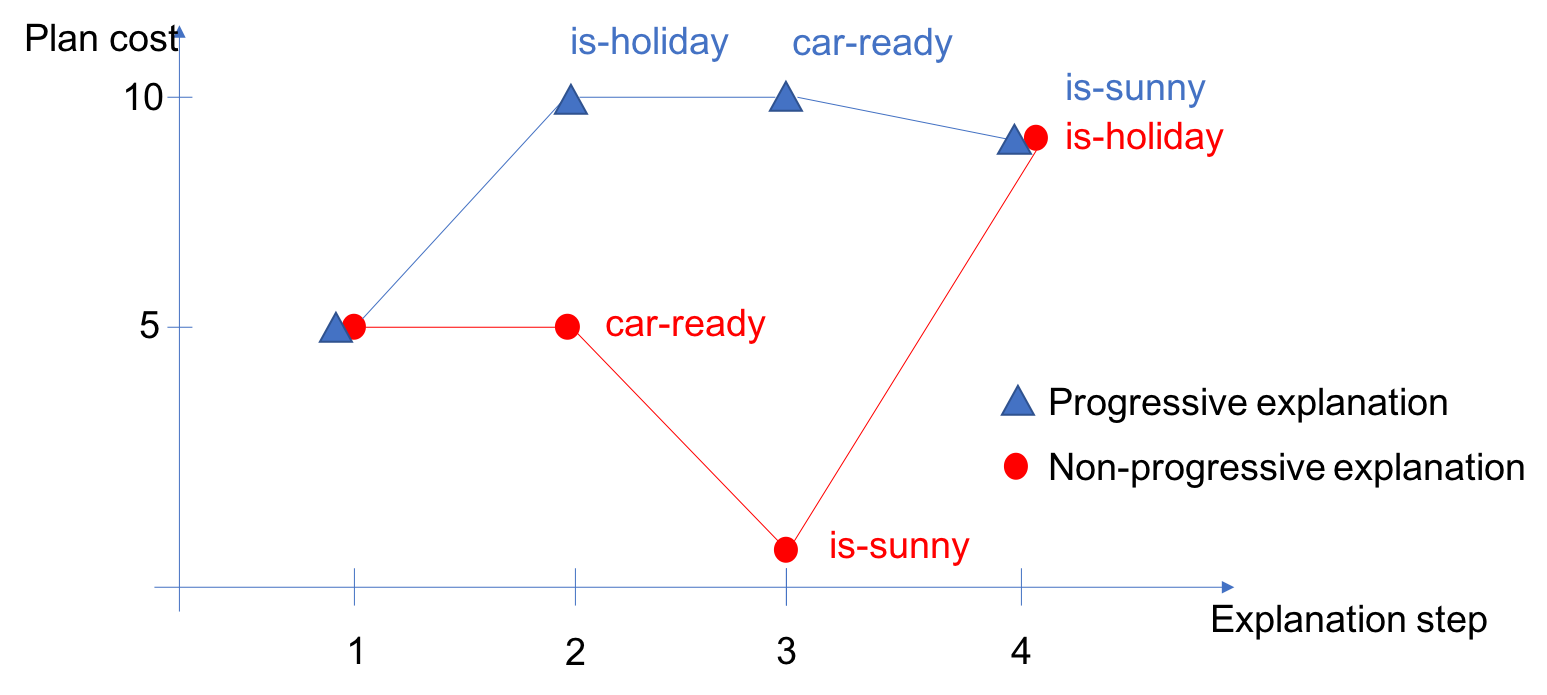}%
\caption{Changes of plan cost per explanation step for the motivating example. The curve of PEG is smoother.}
 \label{example}
\end{figure}

There are many other ways such model-plan distances may be defined. 
For example, one may prefer more significant changes to the model at the beginning than later in the explanation. 
Also, instead of plan editing distance, 
you may consider other common plan distance metrics, such as action,
state, and causal link distances \cite{fox2006plan}. 
Another interesting consideration is the influence of plan hierarchies \cite{erol1994htn, sreedharan2018hierarchical}. 
For example, one may consider aggregating similar feature changes into the same explanation step,
which introduce similar changes to the plan. 
The focus could also be on the changes to plan hierarchies.

\subsection{Planning Method}

Given the heuristics, the planning methods can be implemented as standard $A^*$ searches.  
At each step, the search algorithm can choose a unit feature change from all possible changes
that satisfy condition $1$ in Def. \ref{def:exp}.
As discussed, we may choose to first consider the changes that are more promising.
The search can easily incorporate other considerations such as conciseness. 
This is especially useful for cases when some feature changes do not affect the plans generated or their costs.
In such cases, progressive explanations may include those unnecessary changes.
This can be addressed by adding to the $g$ value a small cost per every change made. 

\section{Evaluation}
\label{sec:eval}

\begin{table*}[h]
\centering
\begin{tabular}{SScSrcSr} \toprule
    {Run Index} & {No. Missing Features} & {\textbf{PEG Size}} & {\textbf{PEG Time} (s)} & {\textbf{PEG} $\sum_i \rho_i$ (P2)} &  {Explanation Size} & {Time (s)} & {$\sum_i \rho_i$ (P2)}  \\ \midrule
    1  & 8 (75) 	& 2 & 150.0   & {\textcolor{blue}{10.0}}    & 2 & 38.3 & {\color{red}{16.0}}  \\
    2  & 8 (75)  	& 4 & 76.8  & {\textcolor{blue}{63.0}}  & 4 & 80.1 & {\textcolor{red}{169.0}}  \\
    3  & 6 (75)  	& 1 & 18.4     & 169.0  & 1 & 4.6 & 169.0  \\
    4  & 6 (75)  	& 3 & 27.6   & 14.0   & 3 & 16.9 & 14.0     \\ 
    5  & 5 (75)	& 3 & 11.6  & 169.0 & 3 & 9.6 & 169.0   \\
    6  & 10 (75)  	& 3 & 677.1& 169.0 & 3 & 75.6 &  169.0  \\
    7  & 8 (75)  	& 3 & 145.5  & {\textcolor{blue}{145.0}} & 3 & 36.9 &  {\textcolor{red}{169.0}}   \\
    8  & 9 (75)  	& {\textcolor{blue}{4}} & 325.9  & {\textcolor{blue}{87.0}} & {\textcolor{red}{3}} & 69.8  &  {\textcolor{red}{91.0}}    \\ 
    9  & 3 (75) 	& 0 & 0.5    & 0.0   & 0 & 0.5 & 0.0  \\
    10 & 9 (75)   & 1  & 167.8   & 4.0   & 1 & 5.0  &   4.0 \\ \bottomrule
    { Average} & {7.2 (9.6\%)}  & 2.4 & 160.0 & 83.0  & 2.3 & 33.7  & 97.0   \\ \bottomrule
\end{tabular}
 \caption{Performance comparison between generating progressive explanations and concise explanations with feature missing probability set to $0.1$. The result is based on rover problem \#1 in the IPC domain. }
 \label{table1}
\end{table*}

\begin{table*}[h]
\centering
\begin{tabular}{SScSrcSr} \toprule
    {Run Index} & {No. Missing Features} & {\textbf{PEG Size}} & {\textbf{PEG Time} (s)} & {\textbf{PEG} $\sum_i \rho_i$ (P2)} &  {Explanation Size} & {Time (s)} & {$\sum_i \rho_i$ (P2)}  \\ \midrule
    1  & 8 (75) 	& 2 & 213.4   & {\textcolor{blue}{2.0}}    & 2 & 38.2 & {\color{red}{4.0}}  \\
    2  & 8 (75)  	& 3 & 160.6  & 30.0  & 3 & 69.7 & 30.0  \\
    3  & 6 (75)  	& 1 & 18.4     & 64.0  & 1 & 4.5 & 64.0  \\
    4  & 6 (75)  	& 2 & 48.5   & 8.0   & 2 & 16.8 & 8.0     \\ 
    5  & 5 (75)	& 3 & 11.5  & 64.0 & 3 & 9.1 & 64.0   \\
    6  & 10 (75)  	& 3 & 666.7& 64.0 & 3 & 70.6 &  64.0  \\
    7  & 8 (75)  	& 2 & 128.0  & 64.0 & 2 & 30.4 & 64.0   \\
    8  & 9 (75)  	& 3 & 352.7  & {\textcolor{blue}{38.0}} & 3 & 64.0  &  {\textcolor{red}{40.0}}    \\ 
    9  & 3 (75) 	& 0 & 0.6    & 0.0   & 0 & 0.6 & 0.0  \\
    10 & 9 (75)   & 1  & 225.5   & 1.0   & 1 & 12.5  &   1.0 \\ \bottomrule
    { Average} & {7.2 (9.6\%)}  & 2.0 & 182.6 & 33.5  & 2.0 & 31.6  & 33.9   \\ \bottomrule
\end{tabular}
 \caption{Performance comparison between generating progressive explanations and concise explanations with feature missing probability set to $0.1$. The result is based on rover problem \#2 in the IPC domain. }
 \label{table2}
\end{table*}

For evaluation, we test our approach on the rover domain--a standard IPC domain. 
In this domain, the rover is to explore the space and communicate samples 
back to the base station via communication stations. 
The robot can sample rock and soil, as well as take images. 
To sample rock and soil, the rover must have an empty storage. 
Before taking an image of an objective, the rover must calibrate its camera
and the camera will need to be recalibrated before being used again. 
The evaluation is performed on a 2.8 GHz quad-core Macbook Pro computer with 16G memory.
The underlying planner is Fastdownward \cite{helmert2006fast}.
Also, since the search methods are similar except for the heuristics, 
we focus on evaluating the distance metric described in Problem \ref{prob2}.
For all the evaluations, we focus on differences between the action models
of $M^R$ and $M^H$,
meaning differences in action preconditions, add and delete effects.
The conclusion should naturally extend when other differences (e.g., differences in the initial state) are considered. 


Here, we compare progressive explanations with concise explanations. 
As we mentioned, a progressive explanation may not be the most concise explanation and vice versa. 
For searching progressive explanations, we also add a small cost to the $g$ value for every unit feature change. 
Given $M^R$, for each model feature in $\Gamma(M^R)$ that involves precondition, add and delete effects for actions, 
we associate it with a missing probability. 
For this evaluation, we set the probability to be $0.1$. 
We first create progressive explanations using $A^*$ with our heuristics.
We then run $A^*$ that only looks at the most concise explanation. 
For both cases, when a plan cannot be found in a model, the plan cost is considered to be $0$.
The result for one of the rover problems is presented in Table \ref{table1}.
The result for a second rover problem is  presented in Table \ref{table2}.
We can see that PEG explanations are almost as concise and at the same time
have a much lower $\sum_i \rho_i$ value in many cases,
although they generally take longer to be found.

Next, we test our algorithms as the number of missing features increases in $M^H$ with respect to $M^R$.
The result is presented in Table \ref{table3}.
As expected, in general, the time for the search increases as the number of missing features increases.
The size of an explanation seems to have little contribution to its computation time,
except when it is $0$ where almost no search work is required. 

\begin{table}[h]
\centering
\begin{tabular}{SSSS} \toprule
    {Missing Prob.} & {No. Missing Features} &  {\textbf{PEG Size}} & {\textbf{PEG Time} (s)}  \\ \midrule
    0.06  & 5 (75)  & 1 & 9.3  \\
    0.07  & 5 (75)  & 3 & 14.3   \\
    0.08  & 6 (75)  & 1 & 19.7   \\
    0.09  & 6 (75)  & 3 & 26.3      \\ 
    0.10  & 5 (75)   & 3 & 10.7    \\
    0.11  & 11 (75) & 3 & 1342.1    \\
    0.12  & 9 (75)  & 3 & 378.3     \\
    0.13  & 9 (75)  & 4 & 373.0       \\ 
    0.14  & 5 (75)  & 0 & 0.7   \\ \bottomrule   
\end{tabular}
 \caption{Time performance for generating PEG as the missing probability changes from $0.06$ to $0.14$ with a step size $0.01$. The result is based on rover problem \#1.}
  \label{table3}
\end{table}

\section{Conclusion}
\label{sec:con}

In this paper, we studied the problem of explanation generation. 
In contrast to prior work, we consider explanation generation
in a model reconciliation setting. 
The key consideration here is that explanation is meant for the explainee
and hence an explanation from the explainer's perspective may not be a desirable explanation.
We take a step further from our prior work by considering not only the right explanation for the explainee, 
but also the underlying cognitive effort required from the explainee for understanding the explanation, resulting in a general framework for PEG. 

An observation is that making explanation is an incremental process that 
constitutes of multiple steps. 
As a result, the cognitive effort can be computed as a sum of the cognitive effort required at each step. 
The goal then becomes minimizing the sum of such effort. 
This converts our explanation generation problem to a sequential decision making problem.
The cognitive effort at each step is associated with a model-plan distance metric.
Efficient search methods are provided for several distance metrics that are intuitively connected to cognition. 
Our approach is evaluated in a standard IPC domain.
Results comparing PEG and concise explanations show the correlation between the two types of explanations,
and illustrate the effectiveness of our search methods.

There are many possible future directions. 
The forefront is a thorough evaluation of progressive explanations with respect to other types of explanations.
Progressive explanations may also be used to introduce dialogs during the explaining process
by identifying when questions may be asked.

\vskip5pt
\noindent
\textbf{\textit{Acknowledgement:}} This research is supported in part by NSF Award 1844524.

\bibliography{bib.bib} 
\bibliographystyle{plain}

\end{document}